\documentclass[letterpaper]{article} 
\usepackage{aaai24}  
\usepackage{times}  
\usepackage{helvet}  
\usepackage{courier}  
\usepackage[hyphens]{url}  
\usepackage{graphicx} 
\urlstyle{rm} 
\usepackage{natbib}  
\usepackage{caption} 
\frenchspacing  
\setlength{\pdfpagewidth}{8.5in}  
\setlength{\pdfpageheight}{11in}  
%
\usepackage{algorithm}

%
\usepackage{newfloat}
\usepackage{listings}
\DeclareCaptionStyle{ruled}{labelfont=normalfont,labelsep=colon,strut=off} 
\lstset{%
	basicstyle={\footnotesize\ttfamily},
	numbers=left,numberstyle=\footnotesize,xleftmargin=2em,
	aboveskip=0pt,belowskip=0pt,%
	showstringspaces=false,tabsize=2,breaklines=true}
\floatstyle{ruled}
\newfloat{listing}{tb}{lst}{}
\floatname{listing}{Listing}
%
\pdfinfo{
/TemplateVersion (2024.1)
}

\usepackage{color}
\usepackage{amsfonts} 
\usepackage{amsmath}
\usepackage[capitalise]{cleveref}
\usepackage{amsthm}
\usepackage{booktabs}
\usepackage{algpseudocode}
\newtheorem{theorem}{Theorem}
\newtheorem{definition}{Definition}

\DeclareMathOperator*{\argmax}{arg\,max}


\setcounter{secnumdepth}{0} 

%


\title{Mitigating Label Bias in Machine Learning: Fairness through Confident Learning}
\author{
    Yixuan Zhang\textsuperscript{\rm 1},
    Boyu Li\textsuperscript{\rm 2},
    Zenan Ling\textsuperscript{\rm 3},
    Feng Zhou\textsuperscript{\rm 4}\thanks{Corresponding author.}
}
\affiliations{
    \textsuperscript{\rm 1}China-Austria Belt and Road Joint Laboratory on AI and AM, Hangzhou Dianzi University, China\\
    \textsuperscript{\rm 2}Data Science Institute, University of Technology Sydney, Australia\\
    \textsuperscript{\rm 3}School of Electronic Information and Communications, Huazhong University of Science and Technology, China\\
    \textsuperscript{\rm 4}Center for Applied Statistics and School of Statistics, Renmin University of China, China\\



    yixuan.zhang@hdu.edu.cn, boyu.li@student.uts.edu.au, lingzenan@hust.edu.cn, feng.zhou@ruc.edu.cn
%
}

\usepackage{bibentry}

\begin{document}

\maketitle

\begin{abstract}
Discrimination can occur when the underlying unbiased labels are overwritten by an agent with potential bias, resulting in biased datasets that unfairly harm specific groups and cause classifiers to inherit these biases. In this paper, we demonstrate that despite only having access to the biased labels, it is possible to eliminate bias by filtering the fairest instances within the framework of confident learning. In the context of confident learning, low self-confidence usually indicates potential label errors; however, this is not always the case. Instances, particularly those from underrepresented groups, might exhibit low confidence scores for reasons other than labeling errors. To address this limitation, our approach employs truncation of the confidence score and extends the confidence interval of the probabilistic threshold. Additionally, we incorporate with co-teaching paradigm for providing a more robust and reliable selection of fair instances and effectively mitigating the adverse effects of biased labels. Through extensive experimentation and evaluation of various datasets, we demonstrate the efficacy of our approach in promoting fairness and reducing the impact of label bias in machine learning models. 
\end{abstract}

\section{Introduction}

In recent decades, we have observed a shift towards an AI-driven society, where machine learning techniques have profoundly affected various aspects of our lives, including finance~\cite{credit_example}, recruitment~\cite{company_example} and law~\cite{compas_juve}. When algorithmic decisions have a significant impact on our lives, it is crucial for decision-makers or regulators to have confidence in the algorithm's performance. While deep neural networks possess the capability to learn complex patterns from input data, they encounter a fundamental challenge in the data they learn from: the labels can be influenced by sensitive information, causing the neural networks to capture and reproduce these undesirable associations. Therefore, to ensure fairness in decision-making, it becomes essential to mitigate the impact of such undesirable relationships.

Research on training fair machine learning models has then received a lot of attention. To achieve fairness, one can incorporate fairness constraints into the learning objectives~\citep{fair_constraints,pmlr-v98-cotter19a,donini2020empirical,Rezaei_2020,pmlr-v119-roh20a}, or modify the model's predictions using threshold adjustments and calibration to align them with fairness constraints~\citep{equal_opportunity,multiaccuracy,inproceedings_bias_mitigate,petersen2021postprocessing}, or use data manipulation or representation learning methods before model training~\citep{NIPS2017_optimised_preprocessing,pmlr-v119-choi20a,pmlr-v108-jiang20a,kamiran_calders_2012}. Nonetheless, the aforementioned methods primarily focus on modifying the machine learning model to tackle the bias problem. There has been limited effort, as indicated by \citet{pmlr-v108-jiang20a}, in directly addressing the biased data itself, despite the fact that often the training data itself exhibits biased features and corresponding labels.

To combat the influence of label bias, numerous approaches have been proposed. For instance, \citet{pmlr-v108-jiang20a} addressed the biased data problem by formulating the label bias as a constrained optimization framework and using a re-weighting method to learn an equivalent unbiased labeling function. On the other hand, \citet{gpl} established fairness constraints by linking ground truth distribution and observed biased distribution. In this paper, we present another effective method that focuses on data selection. Intuitively, if we can select examples with labels less influenced by sensitive information, the network learned from such data will be more robust, leading to a fairer decision-making process.

The previous works aimed at mitigating label bias within the data itself using confidence scores generally encompass two phases~\citep{10.1613/confident_learning,CORDEIRO2023109013}. In the first phase, these approaches obtain confidence scores from the trained model and subsequently eliminate erroneous instances through a probabilistic threshold. The second phase involves model retraining using the remaining clean examples from the previous step to enhance model robustness. Nevertheless, when utilizing probability thresholds to determine the true labels, such methods heavily rely on self-confidence, of which instances with low self-confidence scores are often considered to contain label errors. However, this assumption does not always hold, particularly in scenarios of imbalanced data within the protected group. In these cases, individuals often face disadvantages and receive negative labels, causing deep networks to assign relatively low confidence scores to such instances due to their underrepresented nature~\citep{towards_fairer_datasets}.

To tackle the challenges mentioned earlier, we delve into the uncertainty of the sample selection process to combat biased labels. To alleviate the uncertainty linked with low-confidence examples, we suggest expanding the confidence intervals of the probabilistic threshold using a truncation function. Unlike previous methods that use the average confidence score to determine the probabilistic threshold, our approach reduces the impact of examples with extremely high confidence scores on the threshold. This adjustment facilitates the selection of examples with relatively lower confidence scores, which are often due to their association with disadvantaged groups, resulting in underrepresentation. Additionally, to further improve robustness, we employ the co-teaching paradigm, which involves training different models from subsets of data with varying demographic information. This allows us to cross-validate the selected fair instances from different demographic groups, enhancing the overall robustness and fairness of the approach. Our contributions are four-fold:
\begin{itemize}
    \item We introduce a data selection method that leverages confidence scores to tackle the issue of label bias. Notably, this approach is model-agnostic, thereby enabling its application with a wide range of models. This stands in contrast to many fairness-aware learning techniques aimed at addressing biased labels, which tend to be tightly integrated with specific models and the training process. 

    \item We present an extension of confidence intervals using a robust mean estimator, aimed at minimizing uncertainty in the process of data filtering. 
 
    \item We integrate the concept of co-teaching to enhance the robustness and reduce uncertainty by cross-validating selected fair instances originating from distinct demographic groups. 

    \item We assess the effectiveness of our approach across a range of benchmark datasets. The results highlight the benefits of our method in comparison to alternative baseline techniques.
\end{itemize}

\section{Preliminaries}
In this section, we briefly review the method addressing the biased label problem based on confidence measures. Let $X$ be the input variable, $Y$ be the observed output variable and $Z$ be the true labels. Consider a $k$-class classification problem, we denote the observed dataset as $(\mathbf{x},y)^N\in (\mathbb{R}^d,[k])^N$, where $d$ is the dimensionality of the input space and $N$ is the number of examples. In the context of confident learning, an instance with low self-confidence (as stated in \cref{def: self_confidence}) indicates a higher likelihood of being a label error~\citep{10.1613/confident_learning}. Unlike traditional confidence measures that focus on model predictions, leveraging this assumption, confident learning aims to identify examples with label errors by directly estimating the joint distribution between corrupted labels and true labels with a class-conditional noise process. To achieve this, the whole framework of confident learning combines principles of pruning noisy data, using probabilistic thresholds to estimate noise, and ranking examples to train with confidence.

\begin{definition}[Self-Confidence]
\label{def: self_confidence}
The predicted probability for some model $\theta$ that an instance $\mathbf{x}$ belongs to its given label $y=i$, i.e., $p(y=i; \mathbf{x},\mathbf{\theta})$.
\end{definition}
The confident learning process typically consists of three steps: estimation, pruning, and re-training. During the estimation process, we follow four steps to estimate the joint distribution between corrupted labels and true labels. Firstly, we compute the predicted probability of the $n$-th sample belonging to $j$, denoted as $\hat{p}_n^j = P(y_n=j; \mathbf{x}_n,\theta)$, where $j$ is the observed label and $j\in [k]$. We then use $t_j = \frac{1}{|\mathbf{X}_{y=j}|}\sum_{\mathbf{x}_n\in \mathbf{X}_{y=j}} \hat{p}_n^j$ for the $n$-th sample with observed label $j$ as the probabilistic threshold for class $j$. Next, for each $n$-th sample, we determine its true label $z_n$ is $\argmax_j \hat{p}_n^j$ and we have $\hat{p}_n^j>t_j$. Upon obtaining $z_n$, we keep a count of the label information in the count matrix $C_{y,z}$. For example, if $C_{y=1,z=0} = 40$, it means there are 40 examples that were labeled as $1$ but should have been labeled as $0$. To partition and count label errors, we then introduce the confident joint $\Bar{C}_{y=j,z=i}$, which is given by:
\begin{equation}
\label{eq: count_matrix}
    \Bar{C}_{y=j,z=i} = \frac{C_{y=j,z=i}}{\sum_{i\in [k]}C_{y=j,z=i}}\times |\mathbf{X}_{y=j}|.
\end{equation}
Using $\Bar{C}_{y=j,z=i}$ instead of $C_{y=j,z=i}$ offers the advantage of mitigating the sensitivity arising from class imbalance and distribution heterogeneity. This is achieved because $\Bar{C}_{y=j,z=i}$ employs per-class thresholding as a form of calibration~\citep{10.1613/confident_learning}. Then the joint distribution of $y$ and $z$ can be estimated using the confident joint, and the formula is expressed as:
\begin{equation}
    Q_{y=j,z=i} = \frac{\Bar{C}_{y=j,z=i}}{\sum_{j\in [k], i\in [k]}\Bar{C}_{y=j,z=i}}.
\end{equation}
The numerator ensures that the sum of $Q_{y=j,z=i}$ over all possible values of $i$ is calibrated to match the observed marginals for all $j$ in $[k]$. This calibration ensures that the row sums align with the observed distributions. On the other hand, the denominator calibrates the sum of $Q_{y=j,z=i}$ to be equal to 1, ensuring that the overall distribution is calibrated to sum up to 1. The estimations of $\Bar{C}_{y=j,z=i}$ and $Q_{y=j,z=i}$ are primarily aimed at facilitating the subsequent pruning of corrupted data. As the dataset size grows larger, this estimation method gradually approximates the true distribution more accurately~\citep{10.1613/confident_learning}. In other words, as we gather more data, the estimates become more reliable and better reflect the underlying relationships between the observed labels and the true labels, which enhances the effectiveness of identifying and handling corrupted data points.

In the pruning step, the objective is to identify and filter out erroneous examples. This can be achieved by various pruning approaches. For instance, we can directly remove the sets of examples that constitute the count in the off-diagonals (where true labels and observed labels are different) of $\Bar{C}_{y=j,z=i}$ and use the rest for training. Or we can employ $N \cdot Q_{y=j,z=i}$ to estimate the number of examples with label errors, which is denoted as $\Tilde{N}$, and then select top $\Tilde{N}$ examples based on the rank of predicted probability. Once the erroneous examples are filtered out, the class weights of the remaining examples can be readjusted, and the model can be retrained.

\section{Methodology}

In this section, we introduce our design for the data selection framework to eliminate label bias. We begin with a mathematical expression of the notions of label bias, and we then present the formulation of the data selection function.

\subsection{Notion of Label Bias}
Let $\mathcal{D}$ represent the true underlying distribution defined by the variables $(X, S, Z)$, and $\mathcal{\Tilde{D}}$ represent the observed corrupted distribution defined by $(X, S, Y)$. Here, $X$ refers to the non-protected attributes, $S$ represents the sensitive attributes, and $Z=1$ indicates the desirable outcome (typically considered the positive class). The privileged group is identified as $S=A$, while the disadvantaged group is labeled as $S=B$. In the subsequent sections, we denote the observed subset with membership $S=A$ as $D_A$, and the observed subset with membership $S=B$ as $D_B$. The entire observed dataset is denoted as $D$. We assume there exists a function $\mathcal{G}$ that involves flipping the labels of certain subsets of $D$ based solely on the values of $S$ and $Z$. Accordingly, the process of generating biased labels is represented as $Y=\mathcal{G}(X, S, Z)$. In line with previous work, we consider the scenario where negative examples in the privileged group might have been labeled as positive, while positive examples in the disadvantaged group could have been labeled as negative~\citep{unlocking_fairness, Dai2020LabelBL}. With this context, we introduce the following definitions:
\begin{equation*}
\begin{aligned}
\label{eq:label_bias_notion}
    \rho_A &= P(Y=1\mid S=A, Z=0),\\
    \rho_B &= P(Y=0\mid S=B, Z=1).
\end{aligned}
\end{equation*}
The above expressions demonstrate that negative instances from the privileged group $A$ are subject to flipping with a probability of $\rho_A$, while positive instances from the disadvantaged group $B$ are flipped with a probability of $\rho_B$. In \citet{osti_10190440}, the assumption is made that $\rho_A = 0$, meaning no flipping occurs for negative instances in the privileged group. On the other hand, in \citet{pmlr-v108-fogliato20a}, they set $\rho_B=0$, implying no flipping for positive instances in the disadvantaged group. In \citet{unlocking_fairness}, flipping occurs for both groups, and the flip rate is symmetric, i.e., $\rho_A = \rho_B \neq 0$.

\begin{figure}[!ht]
\centering
\includegraphics[width=1\linewidth]{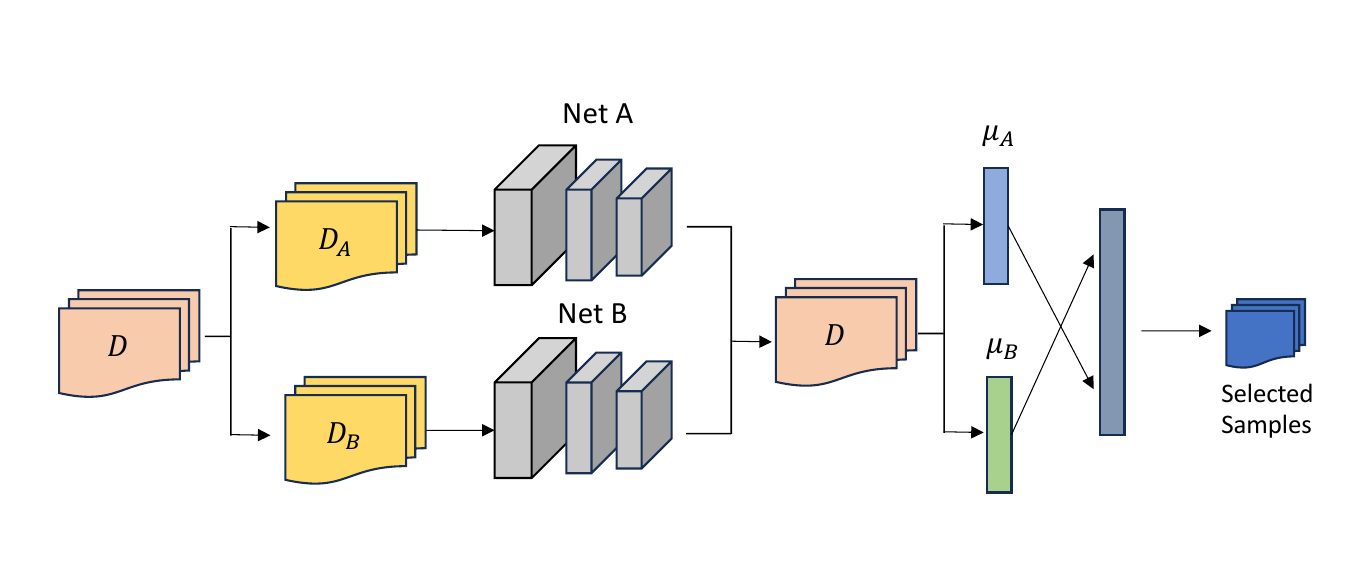}
\caption{Our method involves training two identical models on separate datasets, $D_A$ and $D_B$, and then evaluates the entire dataset $D$ to derive a probabilistic threshold, which guides the selection of the fairest instances. Though our approach is motivated by co-teaching, the training procedure diverges from co-teaching. During training, co-teaching evaluates the selected samples to help identify the most challenging instances. In contrast, we evaluate the entire dataset to validate that the selected samples are not affected by label bias.}
\label{fig:overall_process}
\end{figure}

\subsection{Selecting Unbiased Labels}
To improve the selection of unbiased examples, inspired by~\citet{NEURIPS2018_coteaching}, we train two models sharing the same architecture, denoted as $\mathbf{\theta}_A$ and $\mathbf{\theta}_B$, on separate subsets of the dataset: $D_A$ and $D_B$, respectively. Afterward, we evaluate each model on the entire dataset $D$ and obtain the confidence measure w.r.t. both $\theta_A$ and $\theta_B$. Following the definition of demographic parity, that predictions should be independent of sensitive attributes, we consider a model to be unbiased if its predictive performance, when trained on either $D_A$ or $D_B$, is the same from the results obtained on the entire dataset $D$. 
Then, to find the instances with label bias, we use the examples that lie in the off-diagonals of the confident joint in \cref{eq: count_matrix}. We denote the confident joint measured on $\theta_A$ as $\Bar{C}_A$ and the one measured on $\theta_B$ as $\Bar{C}_B$. $\Bar{C}_A$ and $\Bar{C}_B$ are constructed by the comparison of $t_j^A$ and $t_j^B$ where
\begin{equation}
\begin{aligned}
\label{eq:orig_selction_method}
    t_j^A &= \frac{1}{|\mathbf{X}_{y=j}|}\sum_{\mathbf{x}_n \in \mathbf{X}_{y=j}}\hat{p}(y_n = j; \mathbf{x}_n,\theta_A),\\
    t_j^B &= \frac{1}{|\mathbf{X}_{y=j}|}\sum_{ \mathbf{x}_n \in \mathbf{X}_{y=j}}\hat{p}(y_n = j ; \mathbf{x}_n,\theta_B).\\
\end{aligned}    
\end{equation}
Unlike conventional methods that identify biased instances assuming true labels are determined by:
\begin{equation*}
\label{eq: convention_z}
    z_n = \argmax_{j\in [k]}\hat{p}(y_n=j; \mathbf{x}_n,\theta),
\end{equation*}
we adopt the assumption that the true label is determined by: 
\begin{equation*}
\label{eq: new_z}
    \begin{aligned}
        z_n^A &= \argmax_{j\in [k]:\hat{p}(y_n=j; \mathbf{x}_n,\theta_A)\ge t^A_j}\hat{p}(y_n=j; \mathbf{x}_n,\theta_A), \\ 
        z^B_n &= \argmax_{j\in [k]:\hat{p}(y_n=j; \mathbf{x}_n,\theta_B)\ge t^B_j}\hat{p}(y_n=j; \mathbf{x}_n,\theta_B). \\
    \end{aligned} 
\end{equation*}
Consider the following example: if $D_A$ is primarily composed of positively labeled instances, while the majority of instances in $D_B$ are labeled negative, then for a given instance with observed positive label that measured on $\theta_A$, it is likely to have a higher $\hat{p}_n$ due to $\theta_A$ being overly confident about the positive class, resulting in a proportionally larger $t_j^A$, and vice versa. Therefore, by setting the probabilistic threshold as outlined in \cref{eq:orig_selction_method}, we can mitigate the class imbalance issue.

In addition to addressing the class imbalance problem, we account for the uncertainty associated with the probabilistic threshold, which arises from using the average of confidence scores to select instances belonging to a specific class. Such selection criteria can introduce biases, particularly for instances within the disadvantaged group. These instances may exhibit relatively lower confidence scores that are both beneath $t_j^A$ and $t_j^B$, resulting in their exclusion from the selection process. However, these instances may be correctly labeled and could be valuable for generalization. To provide these instances with a chance, following the approach in \citet{xia2022sample} and \citet{chen2020generalized}, we extend a classical M-estimator~\citep{catoni2011challenging} to truncate $t_j^A$ and $t_j^B$ as follows:
\begin{equation}
\begin{aligned}
    \Bar{t}_j^A &= \frac{1}{|\mathbf{X}_{y=j}|}\sum_{\mathbf{x}_n \in \mathbf{X}_{y=j}}\psi(\hat{p}(y_n = j; \mathbf{x}_n,\theta_A)),\\
    \Bar{t}_j^B &= \frac{1}{|\mathbf{X}_{y=j}|}\sum_{\mathbf{x}_n \in \mathbf{X}_{y=j}}\psi(\hat{p}(y_n = j; \mathbf{x}_n,\theta_B)),
\end{aligned}    
\end{equation}
where $\psi: \mathbb{R}\xrightarrow{} \mathbb{R}$ is a non-decreasing influence function, such that the widest possible choice is $\psi(x) = \log (1+x+\frac{|x|^2}{2})$ for $x\ge 0$. The selection of $\psi$ is motivated by the Taylor expansion of the exponential function, aiming to enhance the robustness of estimation results by mitigating the impact of extreme values. As a result, instances with extremely high confidence scores will have a limited effect on the probabilistic threshold, creating an opportunity for examples with relatively lower confidence scores that could still be correctly labeled to be included in the selection process. To achieve this, we derive the concentration inequalities for instances with low confidence scores based on 
\cref{thm:thm1}. The derivation of \cref{thm:thm1} follows the standard process in \citet{xia2022sample} and we include the proof in the Appendix for completeness. 
\begin{theorem}
\label{thm:thm1}
Consider an observation set $X_N = {x_1, \cdots, x_n}$ with mean $\mu$ and variance $\nu$. We utilize the non-decreasing influence function $\psi(x) = \log(1+x+\frac{x^2}{2})$. For any given $\epsilon > 0$, with a probability of at least $1-2\epsilon$, we have the following inequality:
\begin{equation*}
\left | \frac{1}{N}\sum^N_{n=1}\psi(x_n) - \mu \right | \le \frac{\nu(N+\frac{\nu\log(\epsilon^{-1})}{N^2})}{N-\nu}.
\end{equation*}
\end{theorem}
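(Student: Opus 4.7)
The plan is to invoke a Catoni-style Chernoff (exponential Markov) argument built on the analytic identity
\begin{equation*}
\exp(\psi(x)) = 1 + x + x^2/2,
\end{equation*}
which holds by the very definition of $\psi$ and converts exponential moments of $\psi(x_n)$ into elementary second-order polynomial moments.

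First I would handle the upper tail. By Markov's inequality, together with independence of the $x_n$ and the identity above,
\begin{equation*}
P\!\Bigl(\sum_{n=1}^N \psi(x_n) \ge s\Bigr) \le e^{-s}\prod_{n=1}^N \mathbb{E}[\exp(\psi(x_n))] = e^{-s}\bigl(1 + \mu + (\mu^2 + \nu)/2\bigr)^N,
\end{equation*}
using $\mathbb{E}[x_n^2] = \mu^2 + \nu$. Applying $1 + y \le e^y$ to the bracketed factor, equating the resulting exponential bound to $\epsilon$, and solving for $s$ yields an upper bound on $\tfrac{1}{N}\sum_n \psi(x_n) - \mu$ depending only on $\mu$, $\nu$, $N$ and $\log(\epsilon^{-1})$, valid with probability at least $1-\epsilon$. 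A symmetric argument applied to the negated statistic $-\sum_n \psi(x_n)$, this time using $\exp(-\psi(x)) = 1/(1 + x + x^2/2)$ together with the series bound $1/(1+x+x^2/2) \le 1 - x + x^2/2 + O(x^3)$ on the relevant range, controls the lower tail; a union bound then yields the claimed failure probability $2\epsilon$.

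The remaining work is purely algebraic: collecting the moment terms produced by the Chernoff inversion and recasting them in the rational form $\frac{\nu(N + \nu\log(\epsilon^{-1})/N^2)}{N-\nu}$ stated in the theorem. The key manipulation is the identity $1/(1 - \nu/N) = 1 + \nu/(N-\nu)$, valid whenever $\nu < N$, which converts the polynomial-in-$\nu$ bound emerging from the Chernoff step into the displayed rational expression. I expect this bookkeeping to be the main obstacle, since the raw output of the inversion is a degree-two polynomial in $\nu$ together with a cross-term involving $\mu$, and one must verify that the mean-dependent contributions either cancel through the centering $-\mu$ or are absorbed into the variance factor before rearranging into the final form. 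Modulo this algebraic step, the overall skeleton mirrors the derivation in \citet{xia2022sample} and the original Catoni analysis.
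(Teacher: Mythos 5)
Your overall instinct---a Chernoff argument powered by the identity $\exp(\psi(x)) = 1+x+x^2/2$ and its counterpart $\exp(-\psi(x)) = 1/(1+x+x^2/2) \le 1-x+x^2/2$---is the right family of techniques, but applying $\psi$ \emph{directly to the raw observations} $x_n$ is not how the paper (following Catoni and the generalized M-estimator analysis of Chen et al.\ that it cites) proceeds, and the difference is not mere bookkeeping. Two things go wrong. First, inverting your Markov bound gives an upper deviation of the form $\tfrac{1}{N}\sum_n\psi(x_n) - \mu \le (\mu^2+\nu)/2 + \log(\epsilon^{-1})/N$ (or, without the $1+y\le e^y$ relaxation, $\log\bigl(1+\mu+(\mu^2+\nu)/2\bigr)-\mu + \log(\epsilon^{-1})/N$), and the $\mu$-dependent part does not cancel against the centering: it is a genuine bias of the \emph{uncentered} truncated mean, unbounded in $|\mu|$ (for large $\mu$ the log version behaves like $2\log\mu-\mu$), so no algebraic identity such as $1/(1-\nu/N)=1+\nu/(N-\nu)$ can recast it in the stated $\mu$-free form. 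Second, the characteristic shape of the bound---the $N-\nu$ in the denominator---comes from a scale parameter your argument never introduces. The paper's proof applies the influence function to centered, rescaled deviations, working with $r(\theta)=\tfrac{1}{\beta N}\sum_n \psi(\beta(x_n-\theta))$; the exponential-moment inequalities sandwich $r(\theta)$ between explicit quadratics $B_\pm(\theta)$ with high probability, the estimator is localized between the roots $t_a$ and $t_b$ of those quadratics, and the resulting rate $\bigl(\beta\nu + \log(\epsilon^{-1})/(\beta N)\bigr)/(\beta-1)$ becomes $\nu\bigl(N+\nu\log(\epsilon^{-1})/N^2\bigr)/(N-\nu)$ precisely through the choice $\beta=N/\nu$. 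Without the centering at $\theta$ and the free parameter $\beta$ there is no quadratic to solve and nothing to optimize, so the claimed inequality cannot be reached along your route.

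A smaller remark: your lower-tail step need not appeal to an $O(x^3)$ remainder ``on the relevant range.'' The inequality $1/(1+x+x^2/2)\le 1-x+x^2/2$ holds exactly for all real $x$, since $(1+x+x^2/2)(1-x+x^2/2) = 1+x^4/4 \ge 1$ and $1-x+x^2/2>0$ everywhere. That ingredient of your plan is sound once stated as an exact inequality; the gap lies entirely in the missing centering and scaling.
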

Let $N_s^A$ and $N_s^B$ represent the number of instances selected with respect to $\Bar{C}_A$ and $\Bar{C}_B$, respectively, where $N_s^A$ and $N_s^B$ are both less than or equal to the total number of instances $N$. Let's consider $\epsilon=\frac{1}{2N}$. Now, the threshold can be expressed as follows:
\begin{equation}
\begin{aligned}
\label{eq:selection_criteria}
    \mu_j^{A} = \Bar{t}_j^A - \frac{Q}{N_s^A - \nu},\\
    \mu_j^{B} = \Bar{t}_j^B - \frac{Q}{N_s^B - \nu},
\end{aligned}
\end{equation}
where $Q = \nu(N+\frac{\nu \log(2N)}{N^2})$. These inequalities help us understand the behavior of the estimation when dealing with less certain or less reliable instances.

\begin{algorithm}[]
  \caption{Training Algorithm}\label{alg:algorithm}
  \begin{algorithmic}[1]
   \State \textbf{Input:} training set $D$, model $\theta_A$, $\theta_B$ and $\theta$, epoch $T$, hyperparameter $N_s$ and $\nu$, loss function $\ell$, learning rate $\eta$.
    \For{\texttt{$t=0,\cdots,T$}}
    \State Obtain mini-batch $D^t$ from $D$ and split it into $D^t_A$ and $D^t_B$.
    \State $\theta_A \gets \theta_A - \eta \nabla \ell(D^t_A;\theta_B)$.
    \State $\theta_B \gets \theta_B - \eta \nabla \ell(D^t_B;\theta_B)$.
    \State Obtain $\mu_j^A$ and $\mu_j^B$ using \cref{eq:selection_criteria}
    \If{$\hat{p}(y_n=j;\mathbf{x}_n,\theta_A)\ge \mu_j^A$}
        \State $\hat{z}_n \gets \argmax_{j\in[k]}\hat{p}(y_n=j;\mathbf{x}_n,\theta_A)$
    \ElsIf{$\hat{p}(y_n=j;\mathbf{x}_n,\theta_B) \ge \mu_j^B$}
        \State $\hat{z}_n \gets \argmax_{j\in[k]}\hat{p}(y_n=j;\mathbf{x}_n,\theta_B)$
    \EndIf
    \State Compute the count joint using \cref{eq: count_matrix}.
    \State Fetch the example sets $\Tilde{D}^t_A$, $\Tilde{D}^t_B$ in the off-diagonal of the confident joint $\Bar{C}_A$ and $\Bar{C}_B$.
    \State $ D^t_S \gets D^t \text{\textbackslash} (\Tilde{D}^t_A \cup \Tilde{D}^t_B)$
    \State $\theta \gets \theta - \eta \nabla \ell (D^t_S;\theta)$
    \EndFor
  \end{algorithmic}
\end{algorithm}

\subsection{Training and Optimization}
The overall training process of the proposed approach is illustrated in \cref{fig:overall_process} and \cref{alg:algorithm}. Initially, two networks, denoted as $\theta_A$ and $\theta_B$, and sharing identical architectures, are established. Unlike the original co-teaching framework proposed in the previous literature~\citep{NEURIPS2018_coteaching}, $\theta_A$ and $\theta_B$ do not share the selected subset to update parameters; instead, they jointly evaluate the same dataset to identify biased data and enhance fairness. During each iteration, a mini-batch $D^t$ is drawn from the dataset $D$. This batch is subsequently divided into two separate batches based on the grouping factor $S$, yielding $D^t_A$ and $D^t_B$. We independently train $\theta_A$ and $\theta_B$ using $D^t_A$ and $D^t_B$, respectively. Following this, $D^t$ is evaluated using $\theta_A$ and $\theta_B$ to obtain $\mu_j^A$ and $\mu_j^B$, calculated via the selection criteria specified in \cref{eq:selection_criteria}. Through comparison with the acquired probabilistic threshold, instances residing in the off-diagonal elements of the joint count are identified. These instances are denoted as $\Tilde{D}^t_A$ and $\Tilde{D}^t_B$. The union set encompassing $\Tilde{D}^t_A$ and $\Tilde{D}^t_B$ is then removed, yielding the selected examples for model training. In experiments, we set $N^A_s = N^B_s = N_s$. To find the optimal values for $N_s$ and $\nu$, hyperparameter tuning on the validation set is employed.

\section{Experiments}
In this section, we demonstrate the effectiveness of our methods by comparing them with several baseline models on the benchmark datasets.

\subsection{Datasets} We use the synthetic data for verification and four sets of real-world data for comparison.

\textbf{Synthetic} We use the same setting for synthetic data generation as described in the work of \citet{2016_zafar}. We generate 95,750 fair examples with 2-dimensional non-sensitive attribute space and a 1-dimensional sensitive attribute space.

\textbf{Adult} \citep{adultcensus} The objective of this dataset is to predict whether a person's income exceeds \$50k per year. We consider two demographic groups based on gender. 

\textbf{ProPublica COMPAS} \citep{compas} This dataset contains information about criminal justice. The task is to predict recidivism based on various factors. We consider two demographics based on race.

\textbf{Credit Loan Data} \citep{misc_default_of_credit_card_clients_350} The dataset comprises credit card default records for 30,000 applicants from April to September 2005. We consider gender as the target demographic information.

\textbf{Law School Admissions} \citep{alma991020075739704336} The objective of this dataset is to predict whether or not a student will pass the bar. We form two demographic groups based on gender and use the pass bar as the ground-truth label.

\begin{table}[H]
    \centering
    \begin{tabular}{c c c c}
    \toprule
      Dataset  & N & $|S_A|$ &$|S_B|$ \\
      \midrule
      Synthetic & 95,750 & 72,025 & 23,725 \\
       Adult  & 46,032 & 31,113 & 14,919\\
       COMPAS & 7,214 & 3,518 & 3,696\\
       Credit Loan Data & 30,000 & 11,888 & 18,112 \\
       Law & 20,798 & 17,491 & 3,307\\
    \bottomrule
    \end{tabular}
    \caption{Dataset Description}
    \label{tab:my_label}
\end{table}

\subsection{Baselines}
For all the methods, we construct a simple neural network using ReLU activation functions. Our method is evaluated against several baselines, including \textbf{confident learning (CL)}~\citep{10.1613/confident_learning}, \textbf{LongReMix}~\citep{CORDEIRO2023109013}, \textbf{label bias correction (LC)}~\citep{pmlr-v108-jiang20a}, and the \textbf{group peer loss (GPL)} method as described in \citet{gpl}. Consistent hyperparameters are maintained across all experiments for all methods. Additional implementation details can be found in the Appendix.

\subsection{Fairness Violation}
We assess our performance on various datasets and methods with respect to diverse fairness metrics, which encompass (1) the demographic parity distance metric, (2) the difference of equal opportunity (DEO), and (3) the p\%. 

\textbf{Demographic parity distance metric}~\citep{DBLP:flex}: The definition of demographic parity is that the rate of positive predictions for $S=A$ should be equivalent to that for $S=B$. This metric is formulated as $|\mathbb{E}(\hat{Y}=1\mid S=A) - \mathbb{E}(\hat{Y}=1\mid S=B)|$.

\textbf{Difference of equal opportunity (DEO)}~\citep{equal_opportunity}: The concept of equal opportunity states that the true positive rates for $S=A$ should be identical to those for $S=B$. The difference can be quantified as: $|P(\hat{Y}=1 \mid S=A, Y=1) - P(\hat{Y}=1 \mid S=B, Y=1)|$.

\textbf{p\%}~\citep{biddle}: This measure closely resembles the demographic parity distance metric and can be formulated as: $\text{min}(\frac{P(\hat{Y}=1\mid S=A)}{P(\hat{Y}=1\mid S=B)}, \frac{P(\hat{Y}=1\mid S=A)}{P(\hat{Y}=1\mid S=B)})$.

Due to page limit, we present the DEO as our fairness violation metric in the table, while the outcomes pertaining to other fairness violation metrics like the demographic parity distance metric and the p\% are provided in the Appendix.

\subsection{Generating Biased Labels} 
We address two distinct categories of label bias: (1) symmetric bias, as defined by \citet{unlocking_fairness}, and (2) asymmetric bias, as outlined in \citet{osti_10190440}. For the symmetric bias case, we configure $\rho_A=\rho_B$ with values chosen from the set \{20\%, 40\%\}. Regarding asymmetric bias, we set $\rho_A \neq \rho_B$, where $\rho_A=0$ and $\rho_B$ is selected from \{20\%, 40\%\}. To ensure robust results, we perform 10 rounds of random shuffling on the training set, while retaining 10\% of the biased training examples as a validation set for hyperparameter optimization.

\begin{table*}[t]
\centering
\resizebox{0.95\linewidth}{!}{
\begin{tabular*}{\textwidth}{@{\extracolsep{\fill}}*{9}{c}}
 \toprule
  & \multicolumn{4}{c}{Adult} & \multicolumn{4}{c}{Compas} \\
\cmidrule{2-5}  \cmidrule{6-9} 
& \multicolumn{2}{c}{20\%} & \multicolumn{2}{c}{40\%} & \multicolumn{2}{c}{20\%} & \multicolumn{2}{c}{40\%} \\
\midrule
 Metric  & Err.(\%)($\downarrow$) & Vio.($\downarrow$) & Err.(\%)($\downarrow$) & Vio.($\downarrow$) & Err.(\%)($\downarrow$) & Vio.($\downarrow$) & Err.(\%)($\downarrow$) & Vio.($\downarrow$)\\
 \midrule
CL & 22.61$_{\pm 0.79}$ & 0.17$_{\pm 0.01}$ & 25.13$_{\pm 1.30}$ & 0.24$_{\pm 0.04}$ & 32.69$_{\pm 1.64}$ & 0.20$_{\pm 0.02}$ & 49.75$_{\pm 7.29}$ & $0.19_{\pm 0.02}$\\
LongReMix & $19.66_{\pm 3.09}$ & $0.18_{\pm 0.06}$ & $21.03_{\pm 1.21}$ & $0.19_{\pm 0.03}$ & $40.89_{\pm 2.39}$ & \textbf{0.09$_{\pm 0.02}$} & $43.21_{\pm 1.61}$ & \textbf{0.09$_{\pm 0.02}$} \\
LC & 18.89$_{\pm 0.37}$ & 0.17$_{\pm 0.02}$ & 20.10$_{\pm 0.11}$ & 0.18$_{\pm 0.03}$ & 34.21$_{\pm 0.42}$ & 0.15$_{\pm 0.03}$ & 35.19$_{\pm 0.23}$ & 0.16$_{\pm 0.01}$\\
GPL & $23.50_{\pm 2.40}$ & $0.18_{\pm 0.05}$ & $24.25_{\pm 2.36}$ & $0.16_{\pm 0.02}$& $44.88_{\pm 3.18}$ & $0.18_{\pm 0.03}$ & $46.81_{\pm 3.01}$ & $0.17_{\pm 0.02}$\\
\midrule
Ours & \textbf{15.75$_{\pm 0.62}$} & \textbf{0.12$_{\pm 0.02}$} & \textbf{16.83$_{\pm 1.48}$} & \textbf{0.15$_{\pm 0.02}$} & \textbf{30.75$_{\pm 0.25}$} & 0.15$_{\pm 0.02}$ & \textbf{31.86$_{\pm 1.61}$} & 0.17$_{\pm 0.02}$\\
\midrule
 & \multicolumn{4}{c}{Law} & \multicolumn{4}{c}{Credit} \\
\cmidrule{2-5}  \cmidrule{6-9} 
 & \multicolumn{2}{c}{20\%} & \multicolumn{2}{c}{40\%} & \multicolumn{2}{c}{20\%} & \multicolumn{2}{c}{40\%} \\
\midrule
 Metric  & Err.($\downarrow$) & Vio.($\downarrow$) & Err.($\downarrow$) & Vio.($\downarrow$) & Err.($\downarrow$) & Vio.($\downarrow$) & Err.($\downarrow$) & Vio.($\downarrow$)\\
 \midrule
CL & 15.49$_{\pm 3.56}$ & 0.28$_{\pm 0.05}$  & 16.01$_{\pm 2.15}$& 0.26$_{\pm 0.03}$ & 21.10$_{\pm 1.02}$ & 0.04$_{\pm 0.02}$ & 20.50$_{\pm 0.03}$ &0.03$_{\pm 0.01}$\\
LongReMix & $10.25_{\pm 1.40}$ & $0.13_{\pm 0.09}$ & $9.55_{\pm 0.45}$ & $0.11_{\pm 0.02}$ & $20.10_{\pm 0.99}$  & $0.03_{\pm 0.02}$  &$21.97_{\pm 1.99}$ & $0.04_{\pm 0.01}$ \\
LC & 9.54$_{\pm 0.07}$& \textbf{0.05$_{\pm 0.01}$} & 9.67$_{\pm 0.05}$& 0.06$_{\pm 0.01}$ &19.58$_{\pm 0.27}$ & 0.02$_{\pm 0.01}$& 20.49$_{\pm 0.11}$ & \textbf{0.02$_{\pm 0.01}$} \\
GPL & $10.44_{\pm 0.56}$ & $0.06_{\pm 0.01}$ & $10.26_{\pm 0.46}$ & \textbf{0.05$_{\pm 0.01}$}& $21.70_{\pm 1.06}$ & $0.02_{\pm 0.01}$& $22.82_{\pm 1.03}$ & $0.03_{\pm 0.02}$ \\
\midrule
Ours & \textbf{9.09$_{\pm 0.50}$} & \textbf{0.05$_{\pm 0.01}$} & \textbf{9.31$_{\pm 0.30}$} & 0.09$_{\pm 0.02}$& \textbf{19.13$_{\pm 0.77}$} & \textbf{0.01$_{\pm 0.01}$} & \textbf{19.33$_{\pm 0.68}$} & \textbf{0.02$_{\pm 0.01}$}\\
\bottomrule
\end{tabular*}}
\caption{Experiment Results (Symmetric Bias Scenario): Each row pertains to a specific method. The table illustrates the test errors (\%) and fairness violations of Confident Learning (CL), LongReMix, Label Correction methods (LC), Group Peer Loss (GPL), and our proposed approach.}
\label{tab:comparisons_results_sys}
\end{table*}

\subsection{Mean \emph{v.s.} Truncation}
We utilize the influence function $\psi(x)$ as a truncation mechanism for the probabilistic threshold, deviating from the original averaging-based approach. This modification allows for a meaningful comparison between the two metrics. Synthetic data is employed for assessment. We label the methods using \cref{eq:orig_selction_method} as ``M" with hyperparameters $N_s=0.6$ and $\nu = 10^{-2}$ fixed and we label the method using \cref{eq:selection_criteria} as ``T". Analysis of \cref{tab:syn_compare} reveals that adopting the truncated probabilistic threshold for data selection outperforms the conventional mean estimator approach, both under symmetric and asymmetric bias. Discrepancies between ``M" and ``T" widen with increasing bias, leading to a 3.60\% error for ``M", while the increase is marginal for the ``T" under symmetric bias. In asymmetric bias, ``T" still outperforms ``M", despite higher overall accuracy. Minimal disparity is observed in fairness violation between the two methods on synthetic data. The truncated method consistently exhibits lower fairness violations, often achieving zero violations.

\subsection{Comparison Results}
We present the results in \cref{tab:comparisons_results_sys} and \cref{tab:comparisons_results_asy}. It is evident that our approach consistently produces fair classifiers, often achieving the lowest accuracy errors and fairness violations across all methods on the four real-world datasets. \cref{tab:comparisons_results_sys} presents the results under the symmetric bias setting, where we observe that the test errors increase when the bias magnitude is 40\% compared to when it's 20\%. However, our method exhibits only a marginal increase in test errors and fairness violations with an elevated bias level, indicating more stable performance. It is important to highlight that while LongReMix exhibits the lowest fairness violation (measured by DEO) on the compas dataset, its test error rate is high. This outcome stems from LongRemix predominantly predicting negative outcomes for the majority of examples, resulting in a minimal DEO. However, upon reviewing the outcomes as presented in the Appendix, it becomes evident that when evaluated using alternate fairness metrics, the fairness violations are notably high. Comparing the outcomes in \cref{tab:comparisons_results_asy}, it is apparent that all the methods have better performance than the results displayed in \cref{tab:comparisons_results_sys}. Despite this, our method maintains superior or comparable predictive accuracy when contrasted with other approaches, and it consistently exhibits the lowest levels of fairness violations. An interesting observation is that the results of confident learning itself do not consistently yield fair classifiers compared to label bias correction methods for fairness. This is due to its reliance on class-dependent noise, leading to higher test errors and fairness violations. The reason for this lies in the fact that confident learning does not incorporate demographic information, unlike the label bias correction methods, resulting in less effective performance.

\subsection{Hyperparameter Analysis}
We explore the impact of hyperparameters $\nu$ in the range of $\{10^{-4},10^{-3},10^{-2},10^{-1}\}$ and $N_s$ in the range of $\{0.5,0.6,0.7,0.8,0.9\}$ to examine their impact using synthetic data. The experiment's focus is on assessing the influence of these two hyperparameters. In \cref{fig:syn_illus_s_ns}, we conduct an experiment by keeping $N_s = 0.75$ fixed and varying the value of $\nu$ from $10^{-4}$ to $10^{-1}$. \cref{fig:syn_illus_s_ns} demonstrates that the overall test error when $\rho_A = \rho_B = 0.4$ is higher compared to the error rate when $\rho_A = \rho_B = 0.2$ as we change $N_s$ and $\nu$. We do not report the fairness violation since they are overall very small (close to 0) and consistently exhibit no apparent variation when we change the value of $\nu$ and $N_s$. Regarding $N_s$, we hold $\nu=10^{-2}$ constant and change the value from 0.5 to 0.9. The graphical representation of how $N_s$ and $\nu$ influence the probabilistic threshold is presented in \cref{fig:syn_illus_s_ns}. The plot reveals that the influence function smooths higher confidence scores. Additionally, both $N_s$ and $\nu$ play a role in regulating the deviation from the initial value. Larger values of $\nu$ result in greater deviations from the original value, while smaller values of $N_s$ lead to more substantial deviations from the original value.

\begin{table}[ht]
    \centering
    \resizebox{0.95\linewidth}{!}{
    \begin{tabular}{c c c c c}
    \toprule
    & \multicolumn{4}{c}{Symmetric}\\
    \cmidrule{2-5}
        & \multicolumn{2}{c}{20\%} & \multicolumn{2}{c}{40\%} \\
      \midrule
       & Err.(\%) & Vio. & Err.(\%) & Vio. \\
       \midrule
       M & $1.51_{\pm 0.26}$ & $0.01_{\pm 0.01}$ & $3.60_{\pm 1.59}$ & $0.02_{\pm 0.01}$\\
      T & \textbf{0.77$_{\pm 0.17}$} & \textbf{0.00$_{\pm 0.00}$} & \textbf{0.91$_{\pm 0.37}$} & \textbf{0.01$_{\pm 0.00}$} \\
      \midrule
      & \multicolumn{4}{c}{Asymmetric} \\
      \cmidrule{2-5}
        & \multicolumn{2}{c}{20\%} & \multicolumn{2}{c}{40\%} \\
        \midrule
      & Err.(\%) & Vio. & Err.(\%) & Vio. \\
       \midrule
       M & $0.45_{\pm 0.08}$ & $0.01_{\pm 0.01}$ & $0.55_{\pm 0.36}$ & $0.01_{\pm 0.01}$\\
      T & \textbf{0.36$_{\pm 0.06}$} & \textbf{0.00$_{\pm 0.00}$} & \textbf{0.38$_{\pm 0.10}$} & \textbf{0.00$_{\pm 0.00}$} \\
    \bottomrule
    \end{tabular}}
    \caption{Comparisons between the mean estimator and the truncation method under symmetric and asymmetric bias.}
    \label{tab:syn_compare}
\end{table}

\begin{table*}[t]
\centering
\resizebox{0.95\linewidth}{!}{
\begin{tabular*}{\textwidth}{@{\extracolsep{\fill}}*{9}{c}}
 \toprule
  & \multicolumn{4}{c}{Adult} & \multicolumn{4}{c}{Compas} \\
\cline{2-5}  \cline{6-9} 
& \multicolumn{2}{c}{20\%} & \multicolumn{2}{c}{40\%} & \multicolumn{2}{c}{20\%} & \multicolumn{2}{c}{40\%} \\
\midrule
 Metric  & Err.(\%)($\downarrow$) & Vio.($\downarrow$) & Err.(\%)($\downarrow$) & Vio.($\downarrow$) & Err.(\%)($\downarrow$) & Vio.($\downarrow$) & Err.(\%)($\downarrow$) & Vio.($\downarrow$)\\
 \midrule
CL & $19.70_{\pm 0.49}$ & $0.14_{\pm 0.01}$ & $21.48_{\pm 0.76}$ & $0.16_{\pm 0.01}$ & $33.00_{\pm 0.20}$ & $0.19_{\pm 0.01}$ & $32.08_{\pm 0.74}$ & $0.19_{\pm 0.01}$ \\
LongReMix & $19.86_{\pm 1.33}$ & $0.15_{\pm 0.06}$ & $18.84_{\pm 0.98}$ & $0.16_{\pm 0.01}$ & $33.35_{\pm 1.01}$ & $0.18_{\pm 0.07}$ & $42.80_{\pm 0.83}$ & $0.15_{\pm 0.03}$\\
LC & \textbf{16.36$_{\pm 0.05}$} & $0.13_{\pm 0.01}$ & $17.53_{\pm 0.12}$& $0.10_{\pm 0.02}$ & $37.20_{\pm 0.23}$ & $0.11_{\pm 0.01}$ & $41.16_{\pm 0.20}$& \textbf{0.04$_{\pm 0.00}$} \\
GPL & $18.87_{\pm 0.99}$ & \textbf{0.07$_{\pm 0.02}$} & $18.72_{\pm 0.64}$ & $0.11_{\pm 0.02}$ & $39.28_{\pm 1.10}$ & \textbf{0.10$_{\pm 0.02}$} & $41.44_{\pm 1.11}$ & $0.08_{\pm 0.01}$ \\
\midrule
Ours & $17.76_{\pm 0.36}$& \textbf{0.07$_{\pm 0.01}$} & \textbf{17.46$_{\pm 0.66}$} & \textbf{0.09$_{\pm 0.01}$} & \textbf{30.75$_{\pm 0.20}$} & $0.14_{\pm 0.02}$& \textbf{30.06$_{\pm 0.17}$} & $0.16_{\pm 0.01}$\\
\midrule
 & \multicolumn{4}{c}{Law} & \multicolumn{4}{c}{Credit} \\
\cline{2-5}  \cline{6-9} 
 & \multicolumn{2}{c}{20\%} & \multicolumn{2}{c}{40\%} & \multicolumn{2}{c}{20\%} & \multicolumn{2}{c}{40\%} \\
\midrule
Metric  & Err.(\%)($\downarrow$) & Vio.($\downarrow$) & Err.(\%)($\downarrow$) & Vio.($\downarrow$) & Err.(\%)($\downarrow$) & Vio.($\downarrow$) & Err.(\%)($\downarrow$) & Vio.($\downarrow$)\\
 \midrule
CL & $18.12_{\pm 4.16}$ & $0.31_{\pm 0.05}$ & $22.32_{\pm 6.01}$ & $0.31_{\pm 0.03}$ & $20.43_{\pm 1.06}$ & $0.02_{\pm 0.02}$ & $20.27_{\pm 0.96}$ & $0.02_{\pm 0.01}$\\
LongReMix & $9.61_{\pm 0.51}$ & $0.13_{\pm 0.04}$ & $10.15_{\pm 0.80}$ & $0.14_{\pm 0.08}$ & $21.41_{\pm 0.45}$ & $0.03_{\pm 0.01}$ & $21.46_{\pm 0.40}$ & $0.02_{\pm 0.01}$\\
LC &  $9.33_{\pm 0.07}$ & \textbf{0.01$_{\pm 0.00}$} & $9.39_{\pm 0.07}$& \textbf{0.01$_{\pm 0.00}$}& $20.43_{\pm 0.09}$ & $0.02_{\pm 0.00}$ & $21.39_{\pm 0.05}$ & $0.02_{\pm 0.00}$\\
GPL & $10.69_{\pm 0.50}$ & $0.05_{\pm 0.01}$& $11.00_{\pm 0.60}$ & $0.06_{\pm 0.01}$& $21.18_{\pm 0.56}$ & $0.03_{\pm 0.01}$ & $20.80_{\pm 0.61}$ & $0.02_{\pm 0.01}$\\
\midrule
Ours & \textbf{8.99$_{\pm 0.53}$} & $0.07_{\pm 0.03}$ & \textbf{9.23$_{\pm 0.75}$}& $0.08_{\pm 0.01}$ &  \textbf{18.77$_{\pm 0.21}$} & \textbf{0.02$_{\pm 0.01}$} & \textbf{18.93$_{\pm 0.39}$} & \textbf{0.01$_{\pm 0.00}$}  \\
\bottomrule
\end{tabular*}}
\caption{Experiment Results (Asymmetric Bias Scenario): Each row pertains to a specific method. The table illustrates the test errors (\%) and fairness violations of Confident Learning (CL), LongReMix, Label Correction methods (LC), Group Peer Loss (GPL), and our proposed approach.}
\label{tab:comparisons_results_asy}
\end{table*}

\begin{figure*}[ht]
    \centering
        \includegraphics[width=1\linewidth]{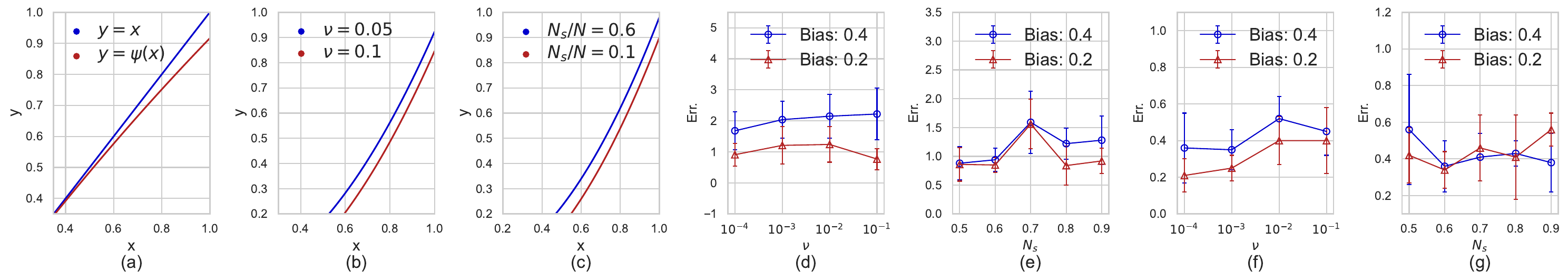}
    \caption{The illustration of the impact of $\psi(x)$: Plot (a) - (c), for (b), we set $\frac{N_s}{\nu} = 0.65$, for (c), we set $\nu = 0.5$; Influence of Hyperparameters on Probabilistic Threshold: Plot (d) - (g). Plot (d) and (e) are under the symmetric bias setting, while Plot (f) and (g) are under the asymmetric bias setting. The impact of hyperparameters $\nu$ and $N_s$ is investigated using synthetic data. }
    \label{fig:syn_illus_s_ns}
\end{figure*}

\section{Related Work}
In this study, we introduced a data selection framework to mitigate label bias and achieve fairness. Existing data selection techniques \citep{Loshchilov2017DecoupledWD, Kawaguchi2019OrderedSA, DBLP:journals/corr/jiang, DBLP:journals/corr/coleman, DBLP:journals/corr/LoshchilovH15,Killamsetty2020GLISTERGB, pmlr-v162-mindermann22a} successfully address biased labels based on loss, but they are not well-suited for fairness. Our approach is loss-agnostic and specifically designed for fairness. Though FairBatch \citep{Roh0WS21} introduces adaptive data selection to achieve fairness, the entire framework neglects label bias. Regarding the research on mitigating biased label problems in fairness, \citet{pmlr-v108-fogliato20a} present experimental findings that even minor biases in observed labels can cast uncertainty on the reliability of analyses that depend on the biased outcomes. \citet{osti_10190440} explored the impact of biased training data on classification and revealed the potential to recover the Bayes Optimal Classifier by combining Empirical Risk Minimization with the equal opportunity constraint under various bias models. Building on these studies, \citet{Dai2020LabelBL} introduced a comprehensive framework addressing label bias and data distribution shift for fairness. \citet{gpl} introduced bias using group-dependent label noise and proposed a surrogate loss function to handle various fairness constraints in the presence of corrupted data with biased labels. Notably, they incorporated insights from Peer loss \citep{liu2020peer} to formulate the group peer loss, addressing group-dependent label noise. Conversely, \citet{pmlr-v108-jiang20a} approached label bias through a constrained optimization framework, proposing a re-weighting method to correct instances affected by label bias. In contrast to previous approaches modifying the objective function during training, our focus is on data selection for explicit label bias mitigation.

\section{Conclusion}
This paper demonstrates that bias can be eliminated by implementing confident learning and filtering the fairest instances, even with access to biased labels. However, challenges arise with instances of low self-confidence, especially for underrepresented groups, which may still contain biased labels and remain unselected. To address this, the proposed approach utilizes the lower bound of confidence intervals, significantly enhancing the robustness and reliability of confidence score thresholding compared to traditional mean estimation methods. Leveraging interval estimation effectively mitigates the adverse effects of biased labels, promoting fairness in machine learning models. Furthermore, incorporating co-teaching enhances fair instance selection, contributing to the overall effectiveness of the framework. Extensive experimentation and evaluation on various datasets underscore the proposed method's efficacy in reducing the impact of label bias and promoting fairness.

\appendix
\setcounter{theorem}{0}
\label{sec:reference_examples}
\section*{Appendix}
\label{sec:appendix_proof}

\subsection*{Proof of Theorem}
\begin{theorem}
Consider an observation set $X_N = {x_1, \cdots, x_n}$ with mean $\mu$ and variance $\nu$. We utilize the non-decreasing influence function $\psi(x) = \log(1+x+\frac{x^2}{2})$. For any given $\epsilon > 0$, with a probability of at least $1-2\epsilon$, we have the following inequality:
\begin{equation*}
\left | \frac{1}{N}\sum^N_{n=1}\psi(x_n) - \mu \right | \le \frac{\nu(N+\frac{\nu\log(\epsilon^{-1})}{N^2})}{N-\nu}.
\end{equation*}
\end{theorem}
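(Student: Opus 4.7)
The plan is to adapt the classical Catoni-style concentration argument for M-estimators, exploiting the key algebraic identity $\exp(\psi(x)) = 1 + x + x^2/2$ that holds by the very definition of the influence function $\psi(x) = \log(1+x+x^2/2)$. This identity is what makes the truncated estimator amenable to a sharp Chernoff-type analysis.

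First I would establish the upper deviation. For any target deviation $t>0$, by Markov applied to $\exp\bigl(\sum_n \psi(x_n)\bigr)$,
\begin{equation*}
P\!\left(\tfrac{1}{N}\textstyle\sum_n \psi(x_n) \ge \mu + t\right) \;\le\; e^{-N(\mu+t)}\prod_{n=1}^N \mathbb{E}\bigl[e^{\psi(x_n)}\bigr] \;=\; e^{-N(\mu+t)}\bigl(1 + \mu + (\nu+\mu^2)/2\bigr)^N,
\end{equation*}
using independence and the mean-variance identity $\mathbb{E}[x_n^2] = \nu+\mu^2$. The key trick $1+u\le e^u$ turns this into $\exp\bigl(-Nt + N(\nu + \mu^2 - 2\mu)/2\bigr)$ (after cancelling the linear term), so equating with $\epsilon$ and solving for $t$ gives an explicit upper tail bound whose leading behaviour is $O(\nu/N) + O(\log(1/\epsilon)/N)$.

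Second, I would obtain the matching lower tail by running the same Chernoff argument on $-\psi(-x_n)$. Here the companion inequality $-\psi(-x) \ge x - x^2/2$ follows from $\log(1 - x + x^2/2) \le -x + x^2/2$ (valid whenever $1-x+x^2/2 > 0$, which always holds since the discriminant of $1-x+x^2/2$ is negative), and the same moment computation controls the exponential. A union bound over the two one-sided tails then yields the two-sided deviation with probability at least $1 - 2\epsilon$, which is the form asserted in the theorem.

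The main obstacle will be the algebraic reconciliation in the last step: the stated bound $\nu(N + \nu\log(\epsilon^{-1})/N^2)/(N-\nu)$ has an unusual $1/(N-\nu)$ factor that does not fall out of the naive Chernoff calculation, which produces a cleaner expression of the form $\nu/2 + \sqrt{2\nu\log(1/\epsilon)}/\sqrt{N}$. Reproducing the quoted closed form almost certainly requires either (i) using a rescaled influence function $\psi_\alpha(x) = \alpha^{-1}\log(1+\alpha x + \alpha^2 x^2/2)$ with a specific tuning of $\alpha$ (as is standard in Catoni's construction) and then optimizing over $\alpha$, or (ii) applying an $1/(1-u)$ style bound in place of $1+u\le e^u$ to absorb the variance into a ratio. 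Pinning down which convention \citet{xia2022sample} adopt, and whether $\nu$ here refers to the variance of $x_n$ or of a centered surrogate, is what turns the informal Chernoff outline above into the precise stated inequality; I expect to recover the bound cleanly once that scaling choice is fixed.
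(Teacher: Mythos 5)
Your outline is the right family of argument --- the paper's proof is exactly a Catoni-style exponential-moment bound, run separately on the upper and lower tails and combined by a union bound over the two events, which is what you propose. You have also correctly diagnosed the one thing your sketch is missing: the stated bound cannot come out of the unscaled Chernoff computation on $\frac{1}{N}\sum_n\psi(x_n)$, because that computation leaves an irreducible bias of order $\nu/2$ plus a $\sqrt{\nu\log(\epsilon^{-1})/N}$ fluctuation, whereas the theorem's right-hand side has the $1/(N-\nu)$ denominator. Your ``option (i)'' is precisely what the paper does: it introduces a scale parameter $\beta$ into the influence function, works with $r(\theta)=\frac{1}{\beta N}\sum_n\psi(\beta(x_n-\theta))$, and invokes the lemmas of Chen et al.\ (2020) to get the two quadratic deviation envelopes
\begin{equation*}
B_{\pm}(\theta)=t-\theta\pm\beta\bigl[\nu+(t-\theta)^2\bigr]\pm\frac{\log(\epsilon^{-1})}{\beta N},
\end{equation*}
each holding with probability $1-\epsilon$. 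Solving for the extreme roots gives $|t_{a,b}-t|\le\bigl(\beta\nu+\frac{\log(\epsilon^{-1})}{\beta N}\bigr)/(\beta-1)$, and the specific (non-optimized) choice $\beta=N/\nu$ turns $\beta-1$ into $(N-\nu)/\nu$, which is exactly where the quoted $\nu(N+\nu\log(\epsilon^{-1})/N^2)/(N-\nu)$ comes from. So the gap in your proposal is not conceptual but it is real: without carrying out the rescaling, the root-finding step, and the substitution $\beta=N/\nu$, you do not obtain the stated inequality, and the ``cleaner'' bound you derive is for a genuinely different (and biased) estimator.

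One further point of comparison. Your Chernoff argument is applied directly to the plug-in average $\frac{1}{N}\sum_n\psi(x_n)$, which is what the theorem statement literally refers to; the paper's machinery, by contrast, bounds the deviation of the \emph{root} of the estimating equation $\sum_n\psi(\beta(x_n-\theta))=0$, and the two quantities are conflated in the paper's own write-up (its final display bounds $|\frac{1}{N}\sum_n\psi(x_n)-t|$ rather than the distance to $\mu$). Your direct approach is therefore arguably the more faithful reading of the statement, but it is also the reason your naive calculation cannot reproduce the claimed rate; the paper's route through the M-estimator root is what makes the $1/(N-\nu)$ factor appear. Your auxiliary facts (the identity $e^{\psi(x)}=1+x+x^2/2$, the companion inequality $e^{-\psi(x)}\le 1-x+x^2/2$ via $(1+x^2/2)^2-x^2\ge 1$, and the negativity of the discriminant of $1-x+x^2/2$) are all correct and are the standard ingredients behind the cited lemmas.
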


\begin{proof}
The proof is standard and has appeared in previous works~\citep{xia2022sample,chen2020generalized}. For completeness, we include the proof below. Consider a variable $x$ with truncation denoted as $t = \frac{1}{N}\sum^n_{i=1}\psi(x_n)$. We establish $r(t) = \sum^N_{i=1}\psi(\beta(x_n-t))=0$, where $\beta$ is a strictly positive real parameter. Suppose for $\theta$, we define the quantity $r(\theta) = \frac{1}{\beta N}\sum^{N}_{n=1}\psi(\alpha (x_n-\theta))$, with $r(\theta)$ as a non-increasing random variable due to the non-decreasing nature of $\psi$. The choice of $\psi$ function is inspired by Taylor-like expansion introduced in \cite{chen2018steins}, the intuition behind that is to generalize the original M-estimator to accommodate scenarios where samples can have finite $\alpha$-th moment with $\alpha \in (1,2)$. 

\textbf{The proof of the main inequalities.} In accordance with Lemma 2.1 in \citet{chen2020generalized}, and setting $\alpha=2$, we can obtain the following inequalities:
$\exp {\beta N r(\theta)}\le \exp {\beta N (t-\theta)+\beta^2 N (\nu+(t-\theta)^2)} $, and $\exp {-\beta N r(\theta)}\le \exp {-\beta N (t-\theta)+\beta^2 N (\nu +(t-\theta)^2)}$. These lead us to the subsequent bounds:
\begin{equation}
\begin{aligned}
    B_{-}(\theta) = t - \theta - \beta [\nu +(t-\theta)^2]-\frac{\log (\epsilon^{-1})}{\beta N}, \\
    B_{+}(\theta) = t - \theta + \beta [\nu +(t-\theta)^2]+\frac{\log (\epsilon^{-1})}{\beta N}. 
\end{aligned}
\end{equation}
Applying Lemma 2.2 presented in \citet{chen2020generalized}, we subsequently deduce that $P(r(\theta)>B_{-}(\theta))\ge 1-\epsilon$ and $P(r(\theta)<B_{+}(\theta))\ge 1-\epsilon$ hold true for any $\theta \in \mathbb{R}$. Consider $t_a$ as the largest root of the quadratic equation $B_{-}(\theta)$ and $t_b$ as the smallest root of the quadratic $B_{+}(\theta)$. Referring to findings in \citet{chen2020generalized}, we then arrive at the following:
\begin{equation}
    \begin{aligned}
        t_a - t &\ge - \frac{\beta \nu + \frac{\log (\epsilon^{-1})}{\beta N}}{\beta -1},\\
        t_b - t &\le \frac{\beta \nu + \frac{\log (\epsilon^{-1})}{\beta N}}{\beta -1}.
    \end{aligned}
\end{equation}
With probability at least $1-2\epsilon $, we have $t_a\le t\le t_b$. We can choose $\beta = \frac{N}{\nu}$, and let $\epsilon = \frac{1}{2N}$ then we have
\begin{equation}
    \left | \frac{1}{N}\sum^n_{n=1}\psi(x_n) - t \right | \le \frac{\nu(N+\frac{\nu\log(\epsilon^{-1})}{N^2})}{N-\nu}. 
\end{equation}
\end{proof}

\subsection*{Other Fairness Violation Notions}

The fairness violation outcomes for various evaluation metrics under a symmetric bias setting are provided in \cref{tab:other_fair_vio_20} and \cref{tab:other_fair_vio_40}. By combining these findings with the results outlined in \cref{tab:comparisons_results_sys}, we can conclude that our method consistently demonstrates the least fairness violation across diverse fairness metrics. Even when $\rho_A = \rho_B = 40\%$, our approach remains competitive, delivering notable results for datasets such as DP in the Adult and Compas datasets, as well as p\% and DP in the Credit datasets, despite not always achieving the lowest DP or the highest p\%.

\label{appendix: other_fair_vio}
\begin{table*}[h]
\centering
\resizebox{\linewidth}{!}{
\begin{tabular*}{\textwidth}{@{\extracolsep{\fill}}*{7}{c}}
 \toprule
Dataset & Metric & Ours  & CL & LongReMix &  LC & GPL  \\
\midrule
Adult & DP($\downarrow$) & \textbf{0.12$_{\pm 0.01}$} & $0.19_{\pm 0.01}$ & $0.16_{\pm 0.02}$ & $0.14_{\pm 0.01}$ & $0.13_{\pm 0.01}$ \\
      & p\%($\uparrow$) & \textbf{0.87$_{\pm 0.01}$} & $0.79_{\pm 0.01}$ & $0.83_{\pm 0.03}$ & $0.86_{\pm 0.01}$ & \textbf{0.87$_{\pm 0.01}$} \\
\midrule
Compas & DP($\downarrow$) &  \textbf{0.02$_{\pm 0.01}$} & $0.03_{\pm 0.01}$ & $0.13_{\pm 0.02}$ & $0.10_{\pm 0.01}$ & $0.04_{\pm 0.02}$\\
       & p\%($\uparrow$) & \textbf{0.97$_{\pm 0.01}$} & $0.96_{\pm 0.02}$ & $0.77_{\pm 0.06}$ & $0.85_{\pm 0.01}$ & $0.93_{\pm 0.04}$\\
\midrule
Law & DP($\downarrow$) & \textbf{0.15$_{\pm 0.01}$} & $0.23_{\pm 0.03}$ & $0.24_{\pm 0.02}$ & $0.24_{\pm 0.01}$ & $0.18_{\pm 0.02}$\\
    & p\%($\uparrow$) & \textbf{0.84$_{\pm 0.02}$} & $0.73_{\pm 0.04}$ & $0.70_{\pm 0.11}$ & $0.72_{\pm 0.06}$ & $0.81_{\pm 0.01}$\\
\midrule
Credit & DP($\downarrow$) & \textbf{0.01$_{\pm 0.01}$} & \textbf{0.01$_{\pm 0.01}$}  & \textbf{0.01$_{\pm 0.01}$}  & \textbf{0.01$_{\pm 0.01}$}  & $0.02_{\pm 0.00}$\\
       & p\%($\uparrow$) & $0.98_{\pm 0.01}$ & $0.98_{\pm 0.01}$ & \textbf{0.99$_{\pm 0.01}$} & ${0.98_{\pm 0.01}}$ & $0.98_{\pm 0.00}$  \\
\bottomrule
\end{tabular*}}
\caption{Experiment Results Regarding Other Measures of Fairness Violations (Symmetric Bias Scenario (20\%)): Each row pertains to a specific method. The table illustrates the fairness violations regarding p\% and the Demographic Parity distance metric (DP) of Confident Learning (CL), LongReMix, Label Correction methods (LC), Group Peer Loss (GPL), and our proposed approach. Notably, methods that achieve the highest p\% and lowest DP for each dataset under varying bias levels are highlighted using bold font.}
\label{tab:other_fair_vio_20}
\end{table*}

\begin{table*}[!ht]
\centering
\resizebox{\linewidth}{!}{
\begin{tabular*}{\textwidth}{@{\extracolsep{\fill}}*{7}{c}}
 \toprule
Dataset & Metric & Ours  & CL & LongReMix & LC & GPL  \\
\midrule
Adult & DP($\downarrow$) & 0.13$_{\pm 0.01}$ & $0.22_{\pm 0.01}$ & $0.17_{\pm 0.02}$ & $0.15_{\pm 0.02}$ & \textbf{0.12$_{\pm 0.01}$}\\
      & p\%($\uparrow$) & \textbf{0.86$_{\pm 0.01}$} & $0.76_{\pm 0.01}$ & $0.79_{\pm 0.05}$ & $0.85_{\pm 0.04}$ & $0.82_{\pm 0.02}$\\
\midrule
Compas & DP($\downarrow$)  & $0.05_{\pm 0.02}$ & $0.13_{\pm 0.01}$ & $0.14_{\pm 0.01}$ & $0.12_{\pm 0.02}$& \textbf{0.03$_{\pm 0.01}$}\\
       & p\%($\uparrow$) & \textbf{0.95$_{\pm 0.03}$} & $0.76_{\pm 0.04}$ & $0.74_{\pm 0.03}$ & $0.81_{\pm 0.02}$& $0.93_{\pm 0.03}$\\
\midrule
Law & DP($\downarrow$) & \textbf{0.16$_{\pm 0.03}$} & $0.21_{\pm 0.03}$ & $0.18_{\pm 0.00}$ & $0.17_{\pm 0.01}$ & $0.18_{\pm 0.02}$\\
    & p\%($\uparrow$) & \textbf{0.82$_{\pm 0.02}$} & $0.76_{\pm 0.03}$ & $0.80_{\pm 0.01}$ & \textbf{0.82$_{\pm 0.01}$} & $0.81_{\pm 0.01}$\\
\midrule
Credit & DP($\downarrow$) & $0.02_{\pm 0.01}$ &  $0.02_{\pm 0.00}$ & \textbf{0.01$_{\pm 0.00}$} & $0.02_{\pm 0.00}$ & $0.02_{\pm 0.01}$ \\
       & p\%($\uparrow$) & $0.98_{\pm 0.01}$ & $0.97_{\pm 0.02}$ & $0.97_{\pm 0.02}$ & \textbf{0.99$_{\pm 0.01}$} & $0.98_{\pm 0.01}$\\
\bottomrule
\end{tabular*}}
\caption{Experiment Results Regarding Other Measures of Fairness Violations (Symmetric Bias Scenario (40\%)): Each row pertains to a specific method. The table illustrates the fairness violations regarding p\% and the Demographic Parity distance metric (DP) of Confident Learning (CL), LongReMix, Label Correction methods (LC), Group Peer Loss (GPL), and our proposed approach. Notably, methods that achieve the highest p\% and lowest DP for each dataset under varying bias levels are highlighted using bold font.}
\label{tab:other_fair_vio_40}
\end{table*}

\subsection*{Implementation Details}
\label{sec:appendix_implementation}
We utilize a two-layer perceptron with a ReLU activation function in our approach, maintaining a consistent architecture across other baseline methods. Both $\theta_A$ and $\theta_B$ adopt this two-layer perceptron architecture. Similarly, the model trained on the selected fair instances also follows this architecture. In the case of the LongReMix baseline, which requires a subset of unlabeled data, we partition 30\% of the training set from each batch for this purpose.

\subsection*{Gnerating Biased Labels}

In this paper, we address two distinct categories of label bias: (1) symmetric bias, as defined by \citet{unlocking_fairness}, and (2) asymmetric bias, as outlined in \citet{osti_10190440}. For the symmetric bias case, we configure $\rho_A=\rho_B$, while in asymmetric bias, we set $\rho_A \neq \rho_B$, where 
\begin{equation*}
\begin{aligned}
    \rho_A &= P(Y=1\mid S=A, Z=0),\\
    \rho_B &= P(Y=0\mid S=B, Z=1).
\end{aligned}
\end{equation*}

For symmetric bias, the assumption posits that the likelihood of a privileged group receiving a positive outcome (which they shouldn't) equals the probability of a protected group receiving a negative outcome (when they should receive a positive outcome). To illustrate, envision a hiring process where a competent candidate from Group B should be hired but isn't, while simultaneously an unqualified candidate from Group A is hired. In this context, we assert that the outcomes for each unqualified individual in Group A and the qualified individual in Group B share an equal probability of being inverted.

For the asymmetric bias case, we incorporate the concept of Under-Representation Bias, as discussed in \citep{osti_10190440}, into our study. In this context, let's take the example of hiring, where the underlying decision function exhibits excessive negativity towards the disadvantaged group. More precisely, the training data lacks an adequate representation of positive examples from Group $B$, resulting in a diminished presence of such instances in the training dataset.

\section{Acknowledgments}
We thank the anonymous reviewers for their constructive feedback and suggestions on how to strengthen the paper. This work was supported by NSFC Project (No. 62106121), the MOE Project of Key Research Institute of Humanities and Social Sciences (22JJD110001), the fund for building world-class universities (disciplines) of Renmin University of China (No. KYGJC2023012), China-Austria Belt and Road Joint Laboratory on Artificial Intelligence and Advanced Manufacturing of Hangzhou Dianzi University, and the Public Computing Cloud, Renmin University of China. 

\bibliography{aaai24}

\end{document}